\definecolor{myblue}{rgb}{0.1490,0.2196,0.3804}
\definecolor{myred}{rgb}{0.6902,0.1412,0.0941}
\newcommand\scalemath[2]{\scalebox{#1}{\mbox{\ensuremath{\displaystyle #2}}}}
\theoremstyle{plain}
\newtheorem{theorem}{Theorem}[section]
\theoremstyle{definition}
\newtheorem{definition}[theorem]{Definition}
\theoremstyle{remark}
\newcommand{\bx}{\bm{\mathrm{x}}}
\newcommand{\bz}{\bm{\mathrm{z}}}
\newcommand{\bu}{\bm{\mathrm{u}}}
\icmltitlerunning{Improving Adversarial Robustness of DEQs with Explicit Regulations Along the Neural Dynamics}
\begin{document}

\twocolumn[
\icmltitle{Improving Adversarial Robustness of Deep Equilibrium Models \\ with Explicit Regulations Along the Neural Dynamics}



\icmlsetsymbol{equal}{*}

\begin{icmlauthorlist}
\icmlauthor{Zonghan Yang}{thu}
\icmlauthor{Peng Li}{air,shlab}
\icmlauthor{Tianyu Pang}{sea}
\icmlauthor{Yang Liu}{thu,air,shlab}
\end{icmlauthorlist}

\icmlaffiliation{thu}{Dept. of Comp. Sci. \& Tech., Institute for AI, Tsinghua University, Beijing, China}
\icmlaffiliation{air}{Institute for AI Industry Research (AIR), Tsinghua University, Beijing, China}
\icmlaffiliation{shlab}{Shanghai Artificial Intelligence Laboratory, Shanghai, China}
\icmlaffiliation{sea}{Sea AI Lab, Singapore}

\icmlcorrespondingauthor{Peng Li}{lipeng@air.tsinghua.edu.cn}
\icmlcorrespondingauthor{Yang Liu}{liuyang2011@tsinghua.edu.cn}

\icmlkeywords{Machine Learning, ICML}

\vskip 0.3in
]



\printAffiliationsAndNotice{} 

\begin{abstract}
\looseness=-1 Deep equilibrium (DEQ) models replace the multiple-layer stacking of conventional deep networks with a fixed-point iteration of a single-layer transformation. Having been demonstrated to be competitive in a variety of real-world scenarios, the adversarial robustness of general DEQs becomes increasingly crucial for their reliable deployment. Existing works improve the robustness of general DEQ models with the widely-used adversarial training (AT) framework, but they fail to exploit the structural uniquenesses of DEQ models. To this end, we interpret DEQs through the lens of neural dynamics and find that AT under-regulates intermediate states. Besides, the intermediate states typically provide predictions with a high prediction entropy. Informed by the correlation between the entropy of dynamical systems and their stability properties, we propose reducing prediction entropy by progressively updating inputs along the neural dynamics. During AT, we also utilize random intermediate states to compute the loss function. Our methods regulate the neural dynamics of DEQ models in this manner. Extensive experiments demonstrate that our methods substantially increase the robustness of DEQ models and even outperform the strong deep network baselines.
\end{abstract}

\section{Introduction}
\label{sec1:intro}
Deep equilibrium (DEQ) models \cite{deq,mdeq} are a type of novel neural architecture. 
Different from traditional deep networks with multiple stacked layers, DEQ models explicitly cast the forward propagation as a fixed-point iteration process with a single-layer transformation:
\begin{equation}
    \bz^\star = f_{\theta}(\bz^\star; \bx),
\label{eq-1}
\end{equation}
\looseness=-1 where $f_\theta$ is the transformation parameterized with $\theta$, $\bx$ is the input, and $\bz^\star$ is the equilibrium solved by fixed-point solvers. While taking $\mathrm{O}(1)$ memory cost because of the single layer, DEQ models are validated to attain competitive performance compared with state-of-the-art traditional deep networks in different applications \cite{deq-gnn,impflow,deq-implicit2,deq-optical-flow,deq-diffusion}.

\begin{figure}[t]
    \centering
    \includegraphics[width=0.45\textwidth]{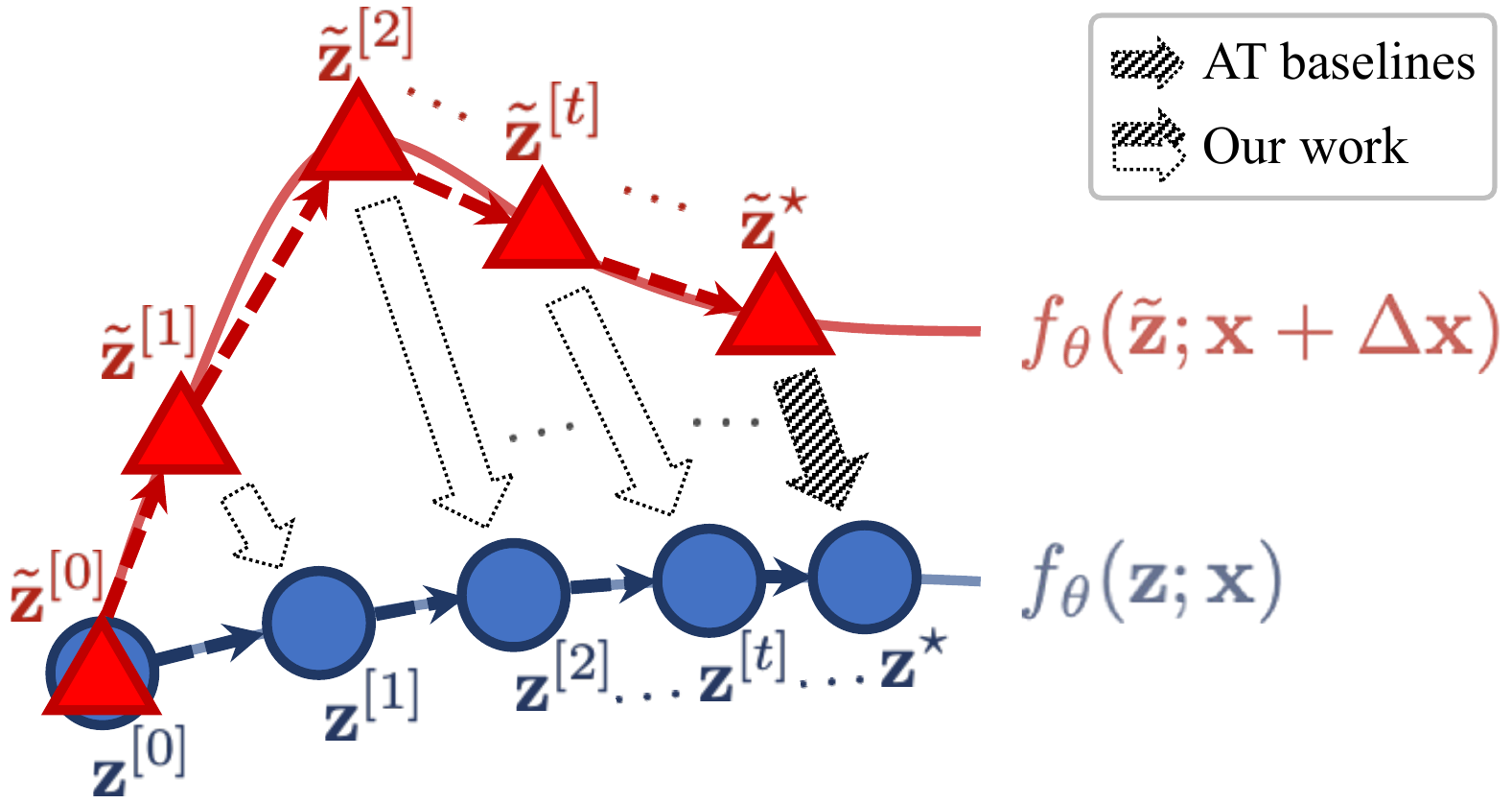}
    \vspace{-2pt}
    \caption{A \textit{conceptual} illustration of the neural dynamics in DEQ models and their regulations. Given input $\bx$, the neural dynamics are composed of intermediate states $\{\textcolor{myblue}{\bz^{[t]}}\}$ at each forward fixed-point iteration in DEQ models. With input perturbation $\Delta \bx$, the neural dynamics are shifted to $\{\textcolor{myred}{\tilde{\bz}^{[t]}}\}$ accordingly. 
    \citet{yang2022acloser} train robust general DEQs with adversarial training (AT), which imposes regulation only on the equilibrium state by enforcing $\textcolor{myred}{\tilde{\bz}^{\star}}$ to give similar predictions as $\textcolor{myblue}{\bz^{\star}}$ (the shaded arrow), leaving other intermediate states (\textit{e.g.}, $\textcolor{myred}{\tilde{\bz}^{[t]}}$) along the neural dynamics under-regulated (the hollow arrows). In this work, we propose to impose explicit regulations along the entire DEQ neural dynamics.}
    \label{fig:fig-1}
\vspace{-3pt}
\end{figure}


Promising in real-world practice, DEQ models necessitate adversarial robustness for their reliable deployment, which however remains underexplored. Most existing works that study robust DEQ models are dedicated to certifying the robustness \cite{certiDEQ-1,certiDEQ-3,certiDEQ-2,certiDEQ-4,certiDEQ-5,wei2022certified} of monotone DEQ. Monotone DEQ \cite{mondeq} is a type of DEQ model that enjoys equilibrium convergence guarantees but requires sophisticated layer and weight parameterization. In addition, the scalability of robustness certification methods also limits the scope of these prior arts for practical use. In contrast, \citet{deq-input-opt} focus on general DEQs and study their empirical adversarial robustness by accelerating the gradient-based attacks. \citet{yang2022acloser} propose white-box robustness evaluation protocols for general DEQs and conduct fair comparisons between DEQs and traditional deep networks under the adversarial training (AT) framework \cite{madry2018towards}. As reported in \cite{yang2022acloser}, however, the white-box robustness performance of general DEQs still falls behind their deep network counterparts. 
As adversarial training is a general technique that can be applied to all kinds of differentiable neural architectures, we ask the following question: 
\textit{Is it possible to exploit the structural uniquenesses of DEQ models to further improve their adversarial robustness?}

Fortunately, the \textit{neural dynamics} perspective for DEQ models brings insights into the problem. The neural dynamics perspective interprets the evolution of intermediate states in a neural model as a dynamical system \cite{eproposal,PMP}. This perspective is naturally suitable for DEQs, as their structure in Eq. (\ref{eq-1}) explicitly formulates the neural dynamics. 
From this perspective, robust neural models correspond to neural dynamics without a drastic shift in the terminal state given a perturbed initial state \cite{RobustNeuralODE,kang2021stable}, and AT enforces the \textit{terminal state} of neural dynamics to give similar predictions whether the input is clean or perturbed \cite{zhang2019theoretically}. 
Shown in Figure \ref{fig:fig-1}, for DEQ models, AT does not \textit{explicitly} regulate \textit{intermediate states} along neural dynamics. However, \citet{yang2022acloser} shows that even a DEQ model is trained by AT, its intermediate states can still be attacked, leading to poor robustness performance. This finding implies the structural specialty of DEQs differentiating from deep networks, and paves the way for explicit regulations along the neural dynamics to improve their adversarial robustness.





In this work, we exploit the structural properties of DEQs to explicitly regulate their neural dynamics for improved robustness. Drawing inspiration from the entropy in dynamical systems and its implications on system stability and robustness, we propose to reduce prediction entropy by progressively updating the inputs along the DEQ neural dynamics. We also randomly select intermediate states along the neural dynamics for loss computation in adversarial training. In this way, our methods integrate explicit regulations along the neural dynamics of DEQ models, and boost the robustness of general DEQs: On the standard white-box robustness evaluation benchmark CIFAR-10 with perturbation range $\ell_\infty = 8/255$, our DEQs achieve significantly better performance in white-box adversarial robustness compared with the results in \citet{yang2022acloser}, and even outperform the strong deep network baselines with benchmarked adversarial robustness results in \citet{pang2020bag}. We have also conducted several ablation studies to validate the effectiveness of our proposed methods. Our code is available at \url{https://github.com/minicheshire/DEQ-Regulating-Neural-Dynamics}. 



\section{Preliminaries}
\looseness=-1 Deep equilibrium models are a class of emerging neural architecture \cite{deq,mdeq}. Of all deep networks, the closest resemblance to a DEQ model is an $M$-layer deep network with weight sharing and input injection. The forward propagation process of such a deep network would be
\begin{equation}
    \bz^{[m+1]} = f_{\theta}(\bz^{[m]}; \bx), ~\bz^{[0]}=\mathbf{0},
\label{eq-2-deep-network}
\end{equation}
where $\bx \in \mathbb{R}^{l}$ is the input, and $\bz^{[m]} \in \mathbb{R}^{d}$ is the intermediate state after the $m$-th layer with $m=0, \cdots, M-1$. $f_{\theta}: \mathbb{R}^{d \times l} \rightarrow \mathbb{R}^{d}$ forms the transformation at each layer, and $\theta$ is the weight shared across different layers of the deep network. When implementing this network in an automatic differentiation engine (\textit{e.g.}, PyTorch \cite{paszke2019pytorch}), the $f_{\theta}(\cdot; \bx)$ transformation is sequentially compounded for $M$ times, and all the intermediate states $\bz^{[1]} \sim \bz^{[M]}$ need to be stored. DEQ models seek the limit of Eq. (\ref{eq-2-deep-network}) when the number of layers goes infinity: Assuming the convergence of the process, as $m \rightarrow \infty$, the state $\bz^{[m]}$ converges to the equilibrium $\bz^\star$ with $\bz^\star = f_{\theta}(\bz^\star; \bx)$, as stated in Eq. (\ref{eq-1}). 

DEQ models cast the ``infinite'' forward process of Eq. (\ref{eq-2-deep-network}) as a fixed-point iteration process to solve for the equilibrium $\bz^\star$ in Eq. (\ref{eq-1}). While the most straightforward way to do this is exactly Eq. (\ref{eq-2-deep-network}), in DEQs, advanced fixed-point solvers (\textit{e.g.}, Broyden's method \cite{Broyden}) are used to accelerate the iteration convergence. For simplicity, we abusively refer to $\bz$'s as the intermediate states of DEQs in the following and discard Eq. (\ref{eq-2-deep-network}). After $N$ iterations of the forward solver, the $\bz^{[N]}$ is returned and is numerically treated as $\bz^{\star} = \bz^{[N]}$.

Now we provide the formal definition for the neural dynamics of DEQ models, which are at the heart of our study. Neural dynamics reflect the evolution of the intermediate states in a neural model. For DEQ models, the neural dynamics consist of the sequence $\{\bz^{[1]}, \cdots, \bz^{[N]}\}$, which satisfies
\begin{equation}
    \bz^{[t+1]} = \mathrm{Solve}\!\left( \bz = f_{\theta}(\bz; \bx);~\bz^{[\leq t]} \right)
\label{eq-4-neural-dynamics}
\end{equation}
for $0\le t<N$ and $\bz^{[0]}=\mathbf{0}$. $\mathrm{Solve}$ is the fixed-point solver for the forward process in DEQs, which is usually instantiated with the Broyden's method \cite{Broyden}. At iteration $t$, the solver $\mathrm{Solve}$ bases on $\bz^{[t]}$ to compute the next intermediate state $\bz^{[t+1]}$ for the fixed-point equation. While $\{\bz^{[t]}\}$ are not stored in memory, \citet{yang2022acloser} demonstrate that the intermediate states $\bz^{[t]} \ne \bz^{\star}$ exhibit higher robustness than the equilibrium state $\bz^{\star}$, and attacks can be constructed for the intermediate $\bz^{[t]}$s. In our work, we explicitly regulate the behavior of all $\{\bz^{[t]}\}$ along the neural dynamics in DEQ models to improve their robustness.

\section{Methodology}
In this section, we demonstrate our approaches that facilitate explicit regulations along the neural dynamics of DEQs to improve robustness. We start with a short overview with two observations about the structural properties of DEQs in Sec. \ref{sec3:1-overview}. We then exploit the two uniquenesses of DEQs and propose two regulation methods in Secs. \ref{sec3:2-testing-entropy-reduction} and \ref{sec3:3-training-random}.

\subsection{Overview} \label{sec3:1-overview}

\looseness=-1 Suppose a trained DEQ model on an image classification task. Its weight $\theta$ is fixed, and the forward iteration number $N$ is constant. From Eq. (\ref{eq-4-neural-dynamics}), it can be seen that the neural dynamics are fully decided by the input $\bx$. How does the perturbation to $\bx$ affect the neural dynamics in the DEQ model? 

Assume that a clean input $\bx$ induces $\{\bz^{[t]}\}$, and a perturbed input $\bx + \Delta \bx$ induces $\{\Tilde{\bz}^{[t]}\}$. To get an intuitive understanding of the difference between them, we replace the $\mathrm{Solve}$ in Eq. (\ref{eq-4-neural-dynamics}) with the most straightforward unrolling for all $t=1, \cdots, N$. For intermediate step $t$, we have
\begin{equation}
    \bz^{[t+1]} = f_{\theta}(\bz^{[t]}; \bx),~\Tilde{\bz}^{[t+1]} = f_{\theta}(\Tilde{\bz}^{[t]}; \bx + \Delta \bx).
\end{equation}
The difference between $\Tilde{\bz}^{[t+1]}$ and $\bz^{[t+1]}$ reads
\begin{equation}
\resizebox{.99999\hsize}{!}{
    $\begin{aligned}
        & \! \| \Tilde{\bz}^{[t+1]} - \bz^{[t+1]} \| = \| f_{\theta}(\Tilde{\bz}^{[t]}; \bx + \Delta \bx) - f_{\theta}(\bz^{[t]}; \bx) \| \\
    = \; & \! \| f_{\theta}(\Tilde{\bz}^{[t]};\bx \!+\! \Delta \bx)\!-\!f_{\theta}(\Tilde{\bz}^{[t]};\bx) \;+\; f_{\theta}(\Tilde{\bz}^{[t]};\bx) \!-\! f_{\theta}(\bz^{[t]};\bx) \| \\     
    \le \; & \!\! \underbrace{\|f_{\theta}(\Tilde{\bz}^{[t]};\bx \!+\! \Delta \bx) \!-\! f_{\theta}(\Tilde{\bz}^{[t]};\bx)\|}_{\text{Perturbation from} ~\bx}\!+\! \underbrace{\|f_{\theta}(\Tilde{\bz}^{[t]};\bx) \!-\! f_{\theta}(\bz^{[t]};\bx)\|}_{\text{Accumulation in} ~\bz}.
    \end{aligned}$
}
\label{eq-difference-in-z}
\end{equation}
According to Eq. (\ref{eq-difference-in-z}), the difference between $\Tilde{\bz}^{[t+1]}$ and $\bz^{[t+1]}$ is inherited from those between $\Tilde{\bz}^{[t]}$ and $\bz^{[t]}$, and further amplified by the perturbed input $\bx + \Delta \bx$.
Fortunately, DEQs differentiate from traditional deep networks \cite{he2016deep,zagoruyko2016wide} in two structural uniquenesses:
(i) The input $\bx$ is involved in each iteration along the neural dynamics of DEQs. In contrast, conventional deep residual networks do not follow a layer-wise input-injection design. (ii) All of the intermediate states along the neural dynamics can be seamlessly sent into the classification head of the DEQ model for predictions. By comparison, for traditional deep networks like ResNets, the intermediate representations are often of different shapes from the input of the top classification layer. According to the two properties in DEQ models, we propose two methods for neural dynamics regulation in Secs. \ref{sec3:2-testing-entropy-reduction} and \ref{sec3:3-training-random}.

\begin{figure}[t]
    \centering
    \includegraphics[width=0.48\textwidth]{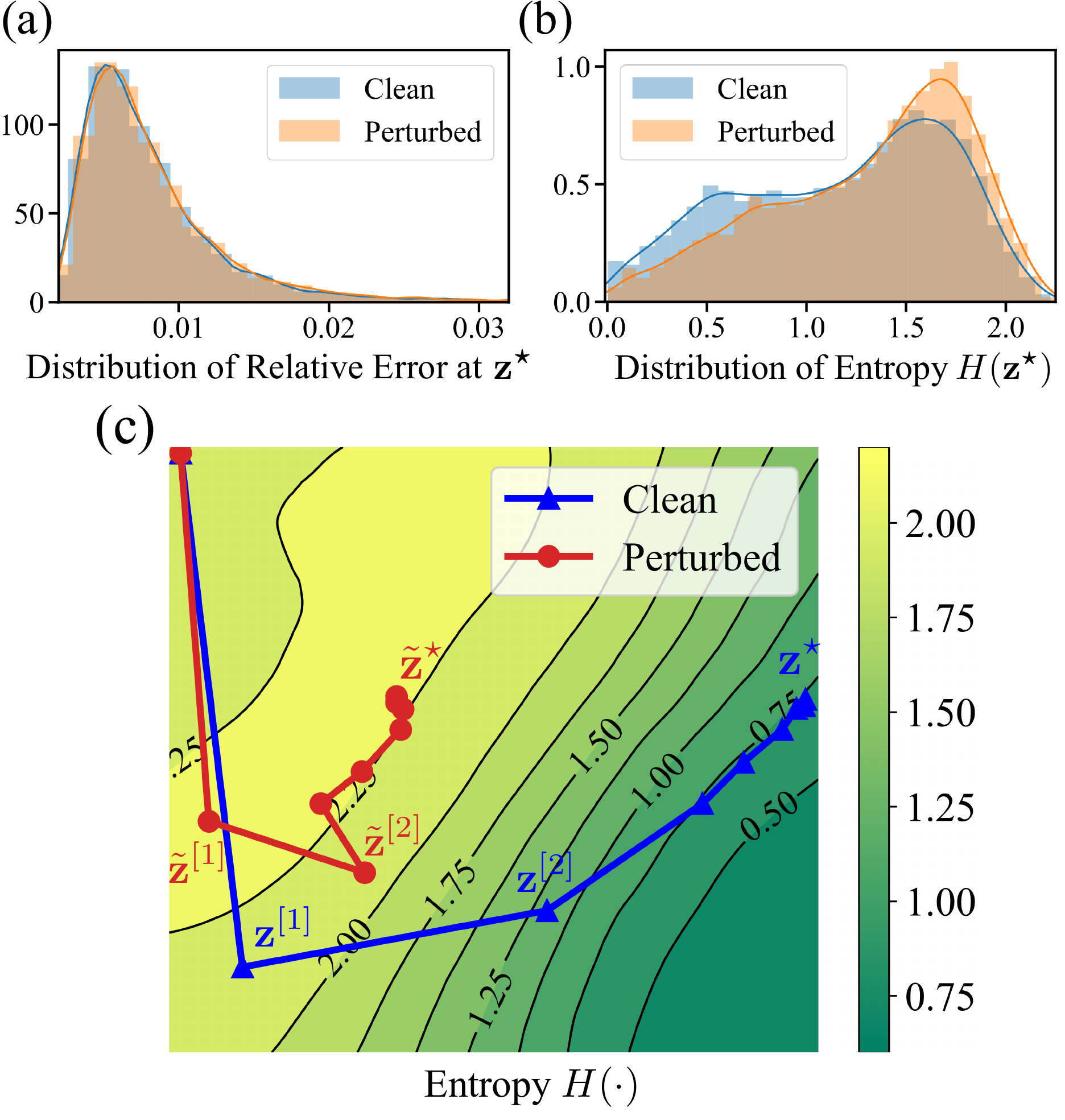}
    \vspace{-20pt}
    \caption{Observations on the prediction entropy of a DEQ model trained with AT under clean and perturbed inputs. \textbf{(a)} The distribution of relative error at the equilibrium state $\bz^{\star}$ in the forward solver. Both clean and perturbed inputs lead to converged fixed-point neural dynamics with similar relative error distribution over the validation set. \textbf{(b)} The distribution of prediction entropy at the equilibrium state. Compared with (a), the clean inputs tend to induce equilibrium states that give smaller prediction entropy than the perturbed input counterparts. \textbf{(c)} An exampled visualization of the prediction entropy of each intermediate state along the neural dynamics. Compared to the clean input with $\textcolor{blue}{{\bz}^{[t]}}$s of diminishing prediction entropy and correct predictions, the perturbed input mistakes the prediction and results in $\textcolor{red}{\Tilde{\bz}^{[t]}}$s of high prediction entropy.} 
    \label{fig:entropy-illustration}
    \vspace{-10pt}
\end{figure}

\subsection{Input Entropy Reduction Along Neural Dynamics} \label{sec3:2-testing-entropy-reduction}

\subsubsection{Observations of Prediction Entropy} \label{sec3:2:1-observations}


As noted in Sec. \ref{sec3:1-overview}, the input $\bx$ is applied along the neural dynamics in DEQ models, and $\bx$ can be either clean or perturbed. A well-trained DEQ model exhibits neural dynamics that always obey a converged fixed-point iteration process with any inputs. Shown in Fig. \ref{fig:entropy-illustration}-(a), for a DEQ model trained with AT, the relative error at the equilibrium state $\|f_\theta(\bz^{[N]}; \bx) - \bz^{[N]}\|_2 / \| f_\theta(\bz^{[N]}; \bx) \|_2$ follows similar distributions with either clean or perturbed input. 

An adversarial example $\bx+\Delta\bx$ leads to the incorrect prediction of $\Tilde{\bz}^{[N]}$. However, the initial state $\bz^{[0]}$ in DEQ models is always set to $\mathbf{0}$. This implies that with a perturbed $\bx+\Delta\bx$, the neural dynamics $\{\Tilde{\bz}^{[t]}\}$ diverge from the $\{{\bz}^{[t]}\}$ under the clean $\bx$ in predictions. While the DEQ model is determined in the fixed-point convergence of the neural dynamics, we investigate whether the intermediate states along the neural dynamics are ``determined'' in their predictions.


To characterize the ``determination'' of predictions, we adopt entropy as the measurement. On the one hand, a higher prediction entropy indicates a more flat probability distribution, with smaller probability differences among different classes. On the other hand, the theory of entropy in dynamical systems \cite{young2003entropy} shows that a dynamical system with higher entropy indicates larger Lyapunov exponents, therefore more inclined to be unstable (see a brief discussion about this in Appendix \ref{app:entropy-dynamical-systems}). Inspired by this, we compute the prediction entropy of an intermediate state $\bz^{[t]}$. Formally, the prediction entropy of $\bz^{[t]}$ is defined as
\begin{equation}
\setlength\abovedisplayskip{3pt}
\setlength\belowdisplayskip{3pt}
\scalemath{0.95}{
   H(\bz^{[t]}) = -\sum_{j=1}^{C} p_j^{[t]} \log p_j^{[t]},}
\label{eq:prediction-entropy}
\end{equation}
\looseness=-1 where $\mathbf{p}^{[t]} = h_{\Phi}\big(\bz^{[t]}\big)$ is the prediction logits vector with $\mathbf{p}^{[t]} = \left[p_j^{[t]}\right] \in \mathbb{R}^{C}$. $h_{\Phi}: \mathbb{R}^{d} \to \mathbb{R}^{C}$ is the classification head in the DEQ model with parameters $\Phi$ and $C$ classes.

We use Eq. (\ref{eq:prediction-entropy}) to investigate the prediction entropy of the equilibrium state in DEQ models with clean or perturbed inputs. Surprisingly, shown in Fig. \ref{fig:entropy-illustration}-(b), we find that the perturbed inputs result in the equilibrium states with higher prediction entropy than the clean inputs from a distributional perspective. It is inferred that the neural dynamics with high prediction entropy are prone to give incorrect predictions. 

\looseness=-1 We further visualize an example of the prediction entropy along all the intermediate states of the neural dynamics in Fig. \ref{fig:entropy-illustration}-(c), with clean $\bx$ or perturbed $\bx+\Delta\bx$ as the input. With the clean input $\bx$, the prediction entropy diminishes along the corresponding neural dynamics. In contrast, with the perturbed input $\bx+\Delta\bx$, the prediction entropy for each intermediate state remains high, and the resulting prediction is mistaken. To guide the perturbed dynamics toward the clean one, we propose to \textit{reduce the prediction entropy by progressively updating the input along the neural dynamics}. In the next section, we provide its optimization framework.

\subsubsection{Input Entropy Reduction Framework}

Given a potentially perturbed input $\bx$, the entropy reduction framework with progressive input updates is formalized as
\begin{equation}
\vspace{-5pt}
\begin{aligned}
    \min_{\bu^{[1]},\cdots,\bu^{[N]}} ~ &H(\bz^{[N]}), \\
    {\rm s.t.} \quad\;\;\,\, &\bz^{[t+1]} = \mathrm{Solve}\!\left( \bz = f_{\theta}(\bz; \bx+\bu^{[t]});~\bz^{[\leq t]} \right), \\
    &\bu^{[t]} \in [-\epsilon, \epsilon]^{l},
\end{aligned}
\label{prob:input-ent-reduction}
\end{equation}
\looseness=-1 where $t=1,\cdots,N$, and $\{\bu^{[t]}\}$ are the updates on the input $\bx$ with range constraints. From (\ref{prob:input-ent-reduction}), the $\{\bu^{[t]}\}$ can be viewed as the controllers along the neural dynamics. Framework~(\ref{prob:input-ent-reduction}) then forms an optimal control problem with the aim of guiding the neural dynamics towards reduced entropy at the final state $\bz^{[N]}$. As demonstrated in \citet{PMP,lu-adv}, by solving the problem with Pontryagin Maximum Principle (PMP) \cite{ode-adjoint}, the gradient descent methods are derived to obtain the optimal controllers $\{\bu^{[t]}\}$.
We employ the iterative projected gradient descent framework to optimize for $\bu^{[t]}$. Specifically, at iteration $t$, after $\bz^{[t+1]}$ in the neural dynamics is obtained by Eq. (\ref{eq-4-neural-dynamics}) given $\bz^{[\le t]}$ and $\bx^{[t]}$, the input $\bx^{[t]}$ is updated to reduce prediction entropy at $\bz^{[t+1]}$ for $R$ iterations:
\begin{equation}
\vspace{-5pt}
    \!\!\!\!\bx^{[t]}_{i} \!=\! {\rm Proj}_{[-\epsilon, \epsilon]^{l}}\!\left( \bx^{[t]}_{i-1} \!-\! \beta \nabla_{\bx} H(f_{\theta}(\bz^{[t+1]}; \bx^{[t]}_{i-1})) \right),\!
\label{eq:input-update-entropy-reduction}
\end{equation}
with $\beta$ as the step size, $i=1, \cdots, R$, $\bx_{0}^{[t]}=\bx^{[t]}$, and $\bx^{[t+1]}=\bx^{[t]}_R$ forms $\bx+\bu^{[t+1]}$. After the updates of the input, the state $\bz^{[t+1]}$ is re-calculated with $\bx^{[t+1]}$ in Eq.~(\ref{eq-4-neural-dynamics}). 
Solving for $\bu^{[t]}$ for each $t$ can be time-consuming. In the implementation, we intervene to optimize for $\bu^{[t]}$ every $T_f$ states along the neural dynamics, \textit{i.e.}, requiring $\bu^{[pT_f+q]}=\bu^{[pT_f+1]}$,~$\forall p,q\in \mathbb{N},
~p\ge0, ~1<q\le T_f$. 


\begin{algorithm}[t]
    \renewcommand{\algorithmicrequire}{\textbf{Input:}}
    \renewcommand{\algorithmicensure}{\textbf{Output:}}
    \caption{\textsc{Entropy Reduction in Sec. \ref{sec3:2-testing-entropy-reduction}}}
    \label{alg:entropy_reduction}
    \begin{algorithmic}[1]
        \REQUIRE Input $\bx$, iteration number $R$ and frequency $T_f$
        \ENSURE  The equilibrium state $\bz^{\star}=\bz^{[N]}$
        \STATE \textcolor{blue}{ /* Test Phase */}
        \STATE $\bz^{[0]}=\mathbf{0}$, $\bx^{[0]}=\bx$
        \FOR{$t=0, \cdots, N-1$}
        \STATE    $\bz^{[t+1]} = \mathrm{Solve}\!\left( \bz = f_{\theta}(\bz; \bx^{[t]});~\bz^{[\leq t]} \right)$
        \STATE    $\bx^{[t+1]}=\bx^{[t]}$
            \IF{$~(t+1) ~\mathrm{\mathbf{mod}}~ T_f = 0~$}
                \STATE \textcolor{blue}{ /* Reducing $\bx^{[t]}$ Prediction Entropy for $R$ Iters */} 
                \STATE $\bx^{[t]}_0=\bx^{[t]}$
                \FOR{$i=1, \cdots, R$}
                    \STATE Update $\bx^{[t]}_{i-1}$ into $\bx^{[t]}_{i}$ with Eq. (\ref{eq:input-update-entropy-reduction})
                \ENDFOR
                \STATE $\bx^{[t+1]}=\bx^{[t]}_{R}$
                \STATE $\bz^{[t+1]} = \mathrm{Solve}\!\left( \bz = f_{\theta}(\bz; \bx^{[t+1]});~\bz^{[\leq t]} \right)$
            \ENDIF
        \ENDFOR
    \end{algorithmic}
\end{algorithm}

The whole process of the input entropy reduction is shown in Algo. \ref{alg:entropy_reduction}. In this way, the original neural dynamics are mounted to a regulated ``orbit'' from $\bz^{[t]}$, which would crucially impact the states afterward and the predictions they give, and eventually result in reduced $H(\bz^{[N]})$. 


Our method is also related to the joint optimization for inputs and states in DEQ models \cite{deq-input-opt}. However, in this work, we do not couple the update of $\bz^{[t+1]}$ with Eq. (\ref{eq:input-update-entropy-reduction}) and refrain from the calculation of the joint Jacobian for $\bx$ and $\bz$. We leave more efficient and effective regulations for neural dynamics in DEQs as future work.


\begin{table*}[t]
    \centering
    \caption{Comparisons among the robustness performance (\%) of deep networks (\textsc{ResNet-18}) \cite{pang2020bag} and the DEQ models of similar parameter counts (\textsc{DEQ-Large}) with our methods in different AT frameworks on CIFAR-10 test set. We leverage the early-state defense proposed in \citet{yang2022acloser} for all DEQ models, and use ``\textsc{All}'' to represent the minimum of the \textsc{PGD} attack and AutoAttack (``\textsc{AA}''). Our methods significantly improve the robustness of DEQs over \cite{yang2022acloser}, and even outperform the strong deep network baselines with the TRADES framework. $^\ddagger$ indicates our implementation; $^{*}$ denotes that the results are brought from \citet{pang2022robustness}.} 
    \resizebox{0.85\textwidth}{!}{\begin{tabular}{lllcccl}
    \toprule
         \textsc{AT Framework} & \textsc{Architecture} & \textsc{Method} & \textsc{Clean} & \textsc{PGD} & \textsc{AA} & \textsc{All} \\
    \midrule
        \multirow{6}{*}{\textsc{PGD-AT}} & \textsc{ResNet-18} & \textsc{\citet{pang2020bag}} & \bf 82.52 & 53.58 & 48.51 & 48.51 \\ 
    \cmidrule{2-7}
         ~ & \multirow{5}{*}{\textsc{DEQ-Large}} & \textsc{\citet{yang2022acloser}} & 79.67 & 47.12 & 48.37 & 47.12 \\
        ~ & ~ & \textsc{\citet{yang2022acloser}$^{\ddagger}$} & 77.89 & 49.45 & 47.58 & 47.58 \\
        ~ & ~ & ~\textsc{+ Ours (Sec. ~\,\ref{sec3:2-testing-entropy-reduction})} & 77.51 & \bf 51.62 & \bf 49.31 & \bf 49.31 \\
        ~ & ~ & ~\textsc{+ Ours (Sec. ~\,\ref{sec3:3-training-random})} & 78.93 & 48.18 & 48.09 & 48.09 \\
        ~ & ~ & ~\textsc{+ Ours (Secs. \ref{sec3:2-testing-entropy-reduction} \& \ref{sec3:3-training-random})} & 80.63 & 49.22 & 43.79 & 43.79 \\
    \midrule
        \multirow{5}{*}{\textsc{TRADES}} & \textsc{ResNet-18} & \textsc{\citet{pang2020bag}$^{*}$} & \bf 81.47 & - & 49.14 & 49.14 \\ 
    \cmidrule{2-7}
         ~ & \multirow{4}{*}{\textsc{DEQ-Large}} & \textsc{\citet{yang2022acloser}}$^{\ddagger}$ & 74.92 & 50.46 & 50.33 & 50.33 \\
        ~ & ~ & ~\textsc{+ Ours (Sec. ~\,\ref{sec3:2-testing-entropy-reduction})} & 73.80 & 51.41 & 50.52 & 50.52 \\
        ~ & ~ & ~\textsc{+ Ours (Sec. ~\,\ref{sec3:3-training-random})} & 77.64 & 51.10 & 49.64 & 49.64 \\
        ~ & ~ & ~\textsc{+ Ours (Secs. \ref{sec3:2-testing-entropy-reduction} \& \ref{sec3:3-training-random})} & 78.89 & \bf 55.18 & \bf 51.50 & \bf 51.50  \\        
    \bottomrule         
    \end{tabular}}
    \label{tab:final-comparisons}
\end{table*}

\subsection{Loss from Random Intermediate States} \label{sec3:3-training-random}

\looseness=-1 In addition to progressively updating the input during testing, we propose another technique for the explicit regulation of the neural dynamics in DEQ models. As shown in Eq. (\ref{eq-difference-in-z}), the second term reflects the difference accumulated in $\Tilde{\bz}^{[t]}$ from ${\bz}^{[t]}$. A straightforward approach to imposing explicit regulations on the intermediate state $\Tilde{\bz}^{[t]}$ is to calculate the adversarial loss using \textit{random intermediate states} during AT. 

Formally, for the vanilla AT baselines, the loss function $L$ in the objective is calculated using only the equilibrium state:
\begin{equation}
    \min_{\theta,\Phi} \max_{\Delta \bx \in [-\epsilon, \epsilon]^{l}} L\left(h_\Phi \big(\Tilde{\bz}^{[N]} \big), y\right),
\label{original-AT-loss}
\end{equation}
with the equilibrium state $\Tilde{\bz}^{[N]}$ satisfying Eq. (\ref{eq-4-neural-dynamics}) with $\bx+\Delta \bx$ as the input, and $y$ is the ground-truth label for $\bx$. Our method calculates Eq. (\ref{original-AT-loss}) with random intermediates:
\begin{equation}
    \min_{\theta,\Phi} \max_{\Delta \bx \in [-\epsilon, \epsilon]^{l}} \mathbb{E}_{i \in \mathcal{U}[1,N]}~ L\left(h_\Phi \big(\Tilde{\bz}^{[i]} \big), y\right),
\label{eq:sec33-our-formulation}
\end{equation}
where we randomly select intermediate states $\Tilde{\bz}^{[i]}$ inside the forward fixed-point solver for loss computation. In this way, all the intermediates are imposed with explicit regulations without violating the $O(1)$ memory constraint of DEQ models. We thus expect their neural dynamics to be less deviated under attacks and exhibit higher robustness. 

\section{Experiments}\label{sec:experiments}
\textbf{Setup.}
We follow the settings in \citet{yang2022acloser} of the configurations of DEQ model architecture: the large-sized DEQ with its parameter count similar to ResNet-18. The number of iterations $N$ in the forward solver is $8$. 
For adversarial training frameworks, we use both PGD-AT \cite{madry2018towards} and TRADES \cite{zhang2019theoretically}. PGD-AT is used in the previous study on robust DEQ models \cite{madry2018towards}, while the regularization term for robustness in TRADES shares similarity with Eq. (\ref{eq-difference-in-z}). We experiment on CIFAR-10 \cite{Krizhevsky2012} with $\ell_\infty$ perturbation range $\epsilon=8/255$. The default hyperparameter setting for the Sec. \ref{sec3:2-testing-entropy-reduction} method is $\beta=2/255$, $R=10$, and $T_f=2$. The detailed settings are listed in Appendix \ref{app:exp-details}.

We follow to use the white-box robustness evaluation protocol proposed in \citet{yang2022acloser}: We use the early-state defense by selecting the intermediate state with the highest accuracy under ready-made PGD-10 to compute for predictions. As the intermediate state is non-differentiable, we adopt the proposed intermediate unrolling method that estimates the gradients used to attack the state. Specifically, the gradients used in the attacks are calculated by unrolling an intermediate state $\bz^{[i]}$ for $K_a$ steps:
\begin{equation}
    \bz_a^{[i+j]} = (1-\lambda)\bz_a^{[i+j-1]} + \lambda f_\theta (\bz_a^{[i+j-1]}; \bx),
\label{eq:unrolled-intermediates}
\end{equation}
with $\bz_a^{[i]} = \bz^{[i]}$ and $j=1,\cdots,K_a$, and $\bz_a^{[i+K_a]}$ is used to compute the loss and take the gradient. In our work, we provide a systematic evaluation by covering all $1 \le i \le N=8$, $1 \le K_a \le 9$, and $\lambda \in \{0.5, 1\}$. Unless specified, all of the single robustness performance that is reported ``under intermediate attacks'' is the minimum accuracy over $8\times9\times2=144$ attacks in the form of Eq. (\ref{eq:unrolled-intermediates}).


\subsection{Main Results}

\looseness=-1 Table \ref{tab:final-comparisons} shows the robustness comparisons among traditional deep networks ResNet-18 and the DEQ models with a similar amount of parameters (DEQ-Large). For the DEQ models, we use the original adversarial training framework and compose it with our methods. According to the results, Both the test-time input entropy reduction in Sec. \ref{sec3:2-testing-entropy-reduction} and the training-time loss computation with random intermediates in Sec. \ref{sec3:3-training-random} improve DEQ model robustness over the vanilla AT baselines. Using our methods, the robustness performance of the DEQ-Large models significantly out-performs the DEQ-Large baselines in \citet{yang2022acloser}, and even surpasses that of ResNet-18 \cite{pang2020bag}.

\begin{table}[t]
    \centering
    \caption{Comparisons between the ready-made PGD-10 at the equilibrium state $\bz^{[N]}$ and the intermediate attack at state $\bz^{[i]}$ (Eq. (\ref{eq:unrolled-intermediates})). The intermediate attacks are always more effective, as they result in lower accuracy (\%) of the model than the ready-made ones.}
    \resizebox{0.48\textwidth}{!}{\begin{tabular}{lllll}
    \toprule
        ~ & ~ & \multicolumn{3}{c}{\textsc{PGD At Which} $\bz^{[t]}$} \\
        \textsc{AT} & \textsc{Method} & \textsc{Final} & \textsc{Inter.} & \textsc{Diff.} \\
    \midrule
        \multirow{4}{*}{\textsc{PGD-AT}} & \textsc{\citet{yang2022acloser}} & 50.55 & 49.45 & 1.10 \\
        ~ & ~\textsc{+ Sec. ~\,\ref{sec3:2-testing-entropy-reduction}} & 53.01 & \bf 51.62 & 1.39 \\
        ~ & ~\textsc{+ Sec. ~\,\ref{sec3:3-training-random}} & 51.05 & 48.18 & 2.87 \\
        ~ & ~\textsc{+ Secs. \ref{sec3:2-testing-entropy-reduction} \& \ref{sec3:3-training-random}} & 54.91 & 49.22 & 5.69 \\
    \midrule
        \multirow{4}{*}{\textsc{TRADES}} & \textsc{\citet{yang2022acloser}} & 51.92 & 50.46 & 1.46 \\
        ~ & ~\textsc{+ Sec. ~\,\ref{sec3:2-testing-entropy-reduction}} & 53.74 & 51.41 & 2.33 \\
        ~ & ~\textsc{+ Sec. ~\,\ref{sec3:3-training-random}} & 52.67 & 51.10 & 1.57 \\
        ~ & ~\textsc{+ Secs. \ref{sec3:2-testing-entropy-reduction} \& \ref{sec3:3-training-random}} & 56.09 & \bf 55.18 & 0.91 \\    
    \bottomrule         
    \end{tabular}}
    \label{tab:diff-between-interm-final-attacks}
    \vspace{-5pt}
\end{table}

\textbf{Intermediate attacks are strong.} As we conduct comprehensive intermediate-state attack experiments, the validated robustness is more reliable than only using ready-made attacks. Table \ref{tab:diff-between-interm-final-attacks} demonstrates that the intermediate-state PGD attacks always result in a larger decrease of white-box adversarial robustness than off-the-shelf attacks at the final state. According to Table \ref{tab:final-comparisons}, the effect of AutoAttack \cite{croce2020reliable} is usually stronger than the PGD-10 attacks in TRADES experiments. This is opposite to the performance reported in \citet{yang2022acloser}, as they argue that the AutoAttack will overfit to the inaccurate gradient estimations by unrolled intermediates and result in attacks weaker than PGD-10. However, in our work, we circumvent the overly inaccurate gradient estimations by scrutinizing all possible pairs of $(i, K_a)$ in Eq. (\ref{eq:unrolled-intermediates}), leading to stronger adaptive-size PGD in AutoAttack. The composition of our methods yields the top robustness in the DEQ models trained with TRADES. The possible reason is that the regularization term in TRADES is more suitable for the regulations on Eq. (\ref{eq-difference-in-z}). 
In the following sections, we conduct further evaluation and analysis with the TRADES-trained DEQ models.

\subsection{Robustness Evaluation for Test-Time Defense}

Sec. \ref{sec3:2-testing-entropy-reduction} describes the algorithm for prediction entropy reduction by iteratively updating the input along the neural dynamics. As the algorithm works at inference time, we follow the guidelines in \citet{croce2022evaluating} to evaluate its robustness. Based on the intermediate-state Eq. (\ref{eq:unrolled-intermediates}) as adaptive attacks for DEQ models, we transfer the attacks across different defense methods of ours (``TRADES Baseline'', ``TRADES + Sec. \ref{sec3:2-testing-entropy-reduction}'', ``TRADES + Sec. \ref{sec3:3-training-random}'', and ``TRADES + Secs. \ref{sec3:2-testing-entropy-reduction} and \ref{sec3:3-training-random}''), and further employ the adaptive-size APGD-CE and the score-based Square \cite{andriushchenko2020square} attacks. It is noted that our Sec. \ref{sec3:2-testing-entropy-reduction} defense operates on the neural dynamics, which lie along the forward pass of DEQ models only. Therefore, following the comments from \citet{croce2022evaluating} to \citet{yoon2021adversarial}, we do not equip BPDA \cite{athalye2018obfuscated} with the attacks for our defense as it is unnecessary.

\begin{table}[t]
    \centering
    \caption{The performance (\%) of each type of our methods under the attacks transferred from other types of our methods with TRADES. We transfer all the intermediate attacks defined by Eq.~(\ref{eq:unrolled-intermediates}) and report the minimum accuracy. Given one of our methods, it is shown that the attacks transferred from other defenses are always weaker than the attacks designed for the given method itself. The lowest robustness for each defense is shaded.}
    \resizebox{0.478\textwidth}{!}{\begin{tabular}{lcccc}
        \toprule
            ~ & \multicolumn{4}{c}{\textsc{PGD Attacks Transferred From}} \\
            \textsc{Method} & \textsc{Base} & \textsc{B+S\ref{sec3:2-testing-entropy-reduction}} & \textsc{B+S\ref{sec3:3-training-random}} & \textsc{B+S\ref{sec3:2-testing-entropy-reduction}\&\ref{sec3:3-training-random}} \\
        \midrule
        \textsc{Base} & \cellcolor{lightgray}{\bf 50.46} & 50.73 & 56.66 & 57.37 \\
        \textsc{+ S\ref{sec3:2-testing-entropy-reduction}} & 51.58 & \cellcolor{lightgray}{\bf 51.41} & 55.60 & 56.09 \\
        \textsc{+ S\ref{sec3:3-training-random}} & 60.00 & 60.24 & \cellcolor{lightgray}{\bf 51.10} & 52.89 \\
        \textsc{+ S\ref{sec3:2-testing-entropy-reduction}\&\ref{sec3:3-training-random}} & 59.87 & 60.06 & 55.19 & \cellcolor{lightgray}{\bf 55.18} \\
        \bottomrule
    \end{tabular}}
    \label{tab:attack_transfer}
    \vspace{-5pt}
\end{table}

\begin{table}[t]
    \centering
    \caption{Robustness performance (\%) of our methods with Secs. \ref{sec3:2-testing-entropy-reduction}, \ref{sec3:3-training-random} and TRADES under strong adaptive attacks. We choose the top $(i,K_a)$ pairs in Eq. (\ref{eq:unrolled-intermediates}) that lead to significant accuracy decreases with \textsc{PGD}, and equip them with \textsc{APGD-CE} \cite{croce2020reliable} and \textsc{Square} \cite{andriushchenko2020square}. The results in Table \ref{tab:final-comparisons} are reported with $(i,K_a)\!=\!(3,5)$, as it forms the strongest attacks. $^{*}$ denotes evaluation with 1,000 test samples.} 
    \resizebox{0.482\textwidth}{!}{\begin{tabular}{lccccccc}
        \toprule
        \multirow{2}{*}{\textsc{Attack}} & \multicolumn{7}{c}{$(i,K_a)$ \textsc{in Eq. (\ref{eq:unrolled-intermediates})}} \\
        ~ & (3,5) & (3,4) & (3,6) & (3,7) & (4,5) & (5,5) & (6,5) \\
        \midrule
        \textsc{PGD} & \bf 55.18 & 55.36 & 55.24 & 55.35 & 55.24 & 55.53 & 55.59 \\
        \textsc{APGD} & \bf 53.29 & 56.75 & 56.95 & 57.07 & 53.38 & 53.92 & 53.91 \\ 
        \textsc{Square}$^{*}$ & 67.00 & 67.23 & 67.28 & 67.13 & \bf 66.65 & 67.00 & 66.75 \\ 
        \bottomrule
    \end{tabular}}
    \label{tab:strong-attacks-for-trades-3233}
    \vspace{-10pt}
\end{table}


\textbf{Attack transferability.} We transfer all the intermediate-state attacks among the four different defense methods based on TRADES to one another. The minimum robust accuracy for each setting is reported in Table \ref{tab:attack_transfer}. It is shown that for each defense method, its strongest attack is still constructed against the method itself. For a robustly-trained DEQ model (either by the TRADES framework or plus the Sec. \ref{sec3:3-training-random} method), the adversarial examples have similar effects whether the Sec. \ref{sec3:2-testing-entropy-reduction} method is used or not. 

\textbf{Exploiting APGD-CE and Square attacks.} We select the $(i, K_a)$ pairs in Eq. (\ref{eq:unrolled-intermediates}), which form the intermediate-state attacks that trigger severe drops in accuracy with PGD-10. Specifically, $(i,K_a)\!=\!(3,5)$ forms the strongest attacks, and the results in Table \ref{tab:final-comparisons} are reported under this setting as well. We then implement these $(i, K_a)$ settings in APGD-CE and Square attacks. Table \ref{tab:strong-attacks-for-trades-3233} shows the effect of these attacks on our strongest defense ``TRADES~+~Secs. \ref{sec3:2-testing-entropy-reduction}~\&~\ref{sec3:3-training-random}''. For Square attacks, due to the time limit, we evaluate the performance on 1,000 test samples. According to the results, the adaptive-size APGD-CE is stronger than PGD, and the defense method retains higher accuracy under Square attacks than PGD and APGD-CE attacks. These phenomena agree with the performance of deep networks \cite{croce2021robustbench}. Finally, the robustness performances among different settings are similar, indicating the robustness of our defense to the configurations in Eq. (\ref{eq:unrolled-intermediates}) as well. 



\begin{figure*}[t]
    \centering
    \includegraphics[width=\textwidth]{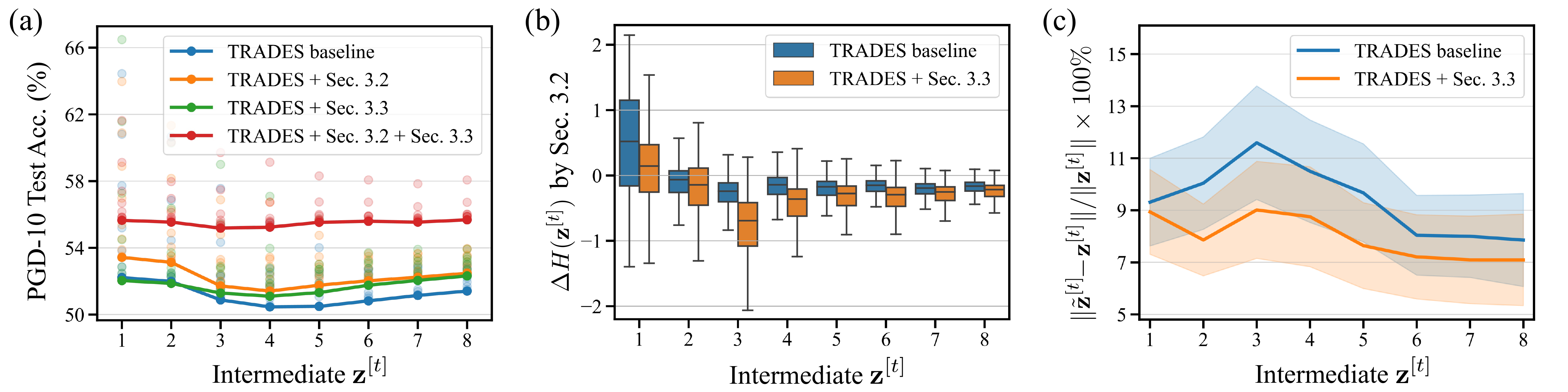}
    \vspace{-15pt}
    \caption{Comparisons among the original TRADES baseline and our methods along the neural dynamics $\bz^{[t]}$ produced by the forward solver in DEQ models. \textbf{(a)} PGD-10 attacks with varied unrolling steps at different intermediate states along the neural dynamics. At each $\bz^{[t]}$, the robustness under the strongest attack is plotted by a solid dot, with others by transparent dots. It is shown that the overall strongest attack is constructed by the state around the middle of the neural dynamics ($\bz^{[3]}$ - $\bz^{[5]}$). The composition of Secs. \ref{sec3:2-testing-entropy-reduction} and \ref{sec3:3-training-random} forms the most robust DEQ model. \textbf{(b)} The effect of entropy reduction by Sec. \ref{sec3:2-testing-entropy-reduction}. For DEQ models trained with both the TRADES baseline and the loss calculated by random intermediates (Sec. \ref{sec3:3-training-random}), the Sec. \ref{sec3:2-testing-entropy-reduction} method always manages to reduce the entropy along the neural dynamics, with the largest decrease in $H(\bz^{[t]})$ at $\bz^{[3]}$. The entropy reduction effect is more significant in the DEQ model trained with ``TRADES + Sec. \ref{sec3:3-training-random}'', which also accounts for the corresponding robustness improvement in Table \ref{tab:final-comparisons}. \textbf{(c)} The relative difference of $\Tilde{\bz}^{[t]}$ (with perturbed inputs) from $\bz^{[t]}$ (with clean inputs). When trained with the method in Sec. \ref{sec3:3-training-random}, the DEQ model shows less-deviated neural dynamics under input perturbations than the ``TRADES baseline'', thus also demonstrating better robustness in Table \ref{tab:final-comparisons}. }
    \label{fig:attack-along-neural-dynamics}
\end{figure*}


\vspace{-10pt}
\section{Analysis and Discussion}

\subsection{Robustness Improvement Along Neural Dynamics}

In this section, we conduct an in-depth analysis of the robustness improvement from our methods with TRADES by investigating the neural dynamics of the DEQ models. We use PGD-10 as it also reliably reflects model robustness in Table \ref{tab:final-comparisons}, while being faster than the adaptive-size attacks.

We first plot the PGD-10 robust accuracy for all the attacks in Eq. (\ref{eq:unrolled-intermediates}) at different intermediate states $\bz^{[t]}$ with various unrolling steps $K_a$ and $\lambda \in \{0.5, 1.0\}$. In Fig. \ref{fig:attack-along-neural-dynamics}-(a), we plot the lowest robust accuracy of the model under the attack constructed by unrolling $\bz^{[t]}$ as a solid dot. The accuracy results under other attacks with $\bz^{[t]}$ unrolling are depicted in transparency. Along the neural dynamics, it is observed that the strongest attack lies around the middle, namely, by unrolling the intermediate state of $\bz^{[3]}$ - $\bz^{[5]}$. The overall robustness performance of the model is determined by the lowest accuracy among the solid dots. Our composed method of Secs. \ref{sec3:2-testing-entropy-reduction} and \ref{sec3:3-training-random} forms the strongest defense, as the lowest accuracy it obtains is much higher than all the other methods. Opposite to \citet{yang2022acloser}, we find that $\lambda=0.5$ in Eq. (\ref{eq:unrolled-intermediates}) results in attacks always stronger than $\lambda=1.0$ (see detailed comparisons in Appendix \ref{app:additional-ablation-lambda}). 

\looseness=-1 To validate the effectiveness of the Sec. \ref{sec3:2-testing-entropy-reduction} method, we quantize the entropy reduction $\Delta H(\bz^{[t]})$ given each input from the validation set perturbed by the strongest intermediate PGD attacks. The distribution of $\Delta H(\bz^{[t]})$ for each $t$ is illustrated in Fig. \ref{fig:attack-along-neural-dynamics}-(b). As Sec. \ref{sec3:2-testing-entropy-reduction} updates the input along the neural dynamics, each $\bz^{[t]}$ is correspondingly modified, thus leading to the difference in terms of its prediction entropy. According to Fig. \ref{fig:attack-along-neural-dynamics}-(b), $\Delta H(\bz^{[t]})<0$ for each $t>1$. This means the entropy at each $\bz^{[t]}$ is reduced for $t>1$, which verifies the effectiveness of our method. Specifically, the largest deterioration of $\Delta H(\bz^{[t]})$ happens at $\bz^{[3]}$, which is also around the middle of the neural dynamics.

Finally, we demonstrate the effect of Sec. \ref{sec3:3-training-random} by plotting the relative difference of intermediate states ($\bz^{[t]}$ and $\Tilde{\bz}^{[t]}$) along the neural dynamics given a clean input $\bx$ and the perturbed one $\bx+\Delta\bx$. Shown in Fig. \ref{fig:attack-along-neural-dynamics}-(c), when trained with the loss computed with random intermediates, the DEQ model exhibits neural dynamics with the less relative difference among the clean and the perturbed inputs than the baseline. This analysis also accounts for the superiority of the Sec.~\ref{sec3:3-training-random} method in adversarial robustness, as shown in Table \ref{tab:final-comparisons}.

\begin{table}[t]
    \centering
    \vspace{-7pt}
    \caption{Robustness performance (\%) of the test-time method in Sec. \ref{sec3:2-testing-entropy-reduction} with different settings of the frequency $T_f$ and the iteration number $R$, with the underlying DEQ model trained with TRADES + the Sec. \ref{sec3:3-training-random} method. For each pair of $T_f$ and $R$, we evaluate the robustness accuracies using all PGD attacks along the neural dynamics of the DEQ model and report the minimum among them.}   
    \resizebox{0.38\textwidth}{!}{\begin{tabular}{l|cccc}
        \toprule
                & $R=1$ & $R=5$ & $R=10$ & $R=20$  \\
        \midrule
        $T_f=1$ & 54.67 & 54.96 & 55.07 & 55.03 \\
        $T_f=2$ & 54.15 & 55.05 & 55.18 & \bf 55.23 \\
        $T_f=4$ & 53.39 & 54.80 & 54.85 & 54.85 \\
        $T_f=8$ & 52.85 & 52.80 & 52.83 & 52.81 \\
        \bottomrule
    \end{tabular}}
    \label{tab:ablation-effect-sec32}
    \vspace{-7pt}
\end{table}

\vspace{-5pt}
\subsection{Effect of $T_f$ and $R$ in Sec. \ref{sec3:2-testing-entropy-reduction}}
\vspace{-3pt}

In this section, we ablate the effect of $T_f$ and $R$ in the Sec. \ref{sec3:2-testing-entropy-reduction} method. During the input entropy reduction process, $T_f$ controls the intervention frequency of input updates along the neural dynamics, and $R$ denotes the iteration number within each intervention of input entropy reduction. According to the results in Table \ref{tab:ablation-effect-sec32}, a larger $T_f$ leads to relatively lower robustness. This indicates that the Sec. \ref{sec3:2-testing-entropy-reduction} method would benefit from frequent input entropy reduction. Similarly, the method becomes more effective as $R$ increases. As the time consumption is proportional to $R$ and inversely proportional to $T_f$, we set $R=20$ and $T_f=2$ in our main experiments (see Appendix \ref{app:time-cost} for time cost analysis).


\begin{figure}[t]
    \centering
    \includegraphics[width=0.4\textwidth]{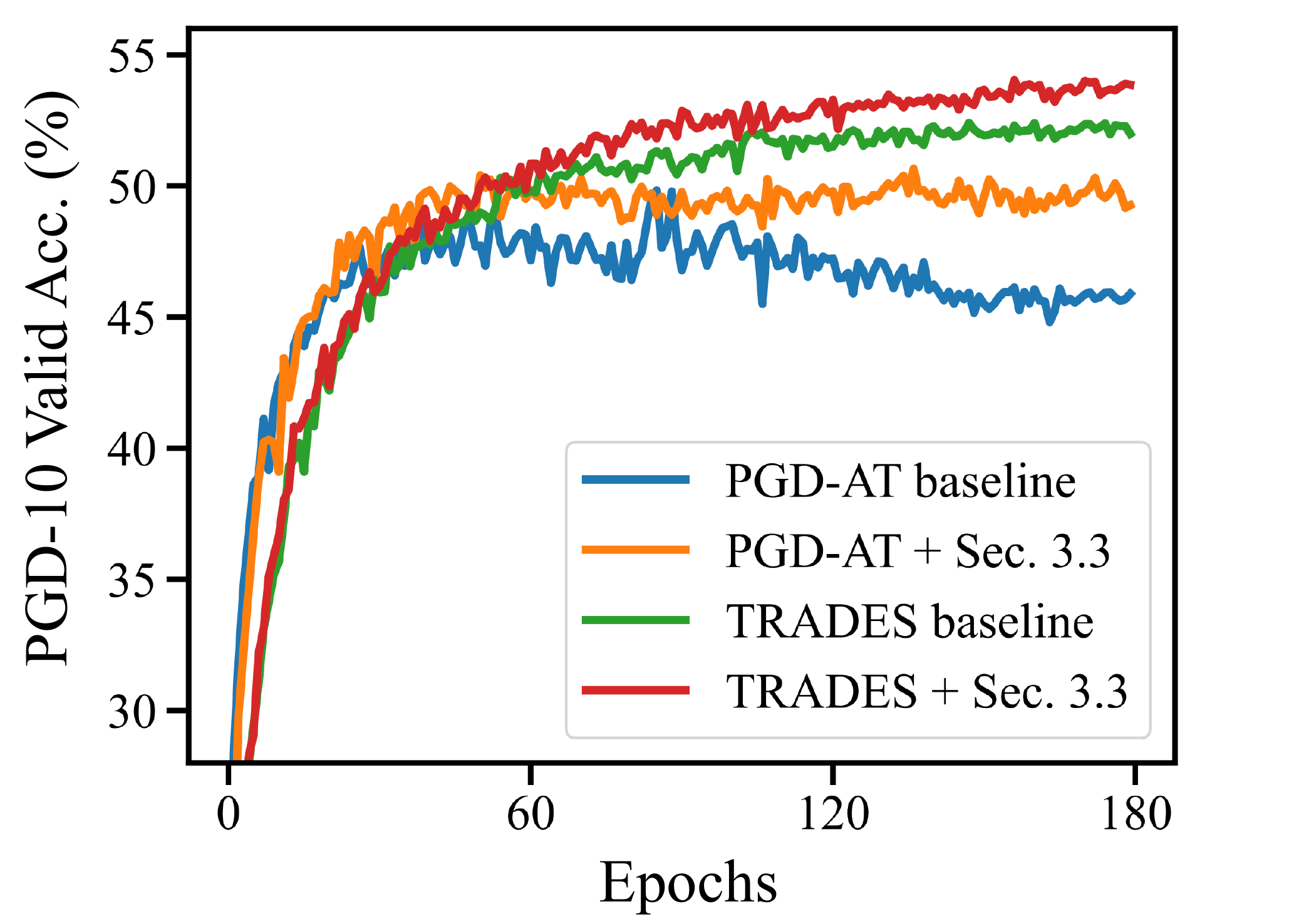}
    \vspace{-5pt}    
    \caption{Robustness performance of different methods under ready-made PGD-10 on the CIFAR-10 validation set in DEQ training. The integration of Sec. \ref{sec3:3-training-random} stabilizes the training process and yields consistent robustness improvement over the AT baselines.}
    \label{fig:training-sec33-analysis}
    \vspace{-5pt}
\end{figure}
\subsection{Effect of Sec. \ref{sec3:3-training-random} During Training}
In Fig. \ref{fig:training-sec33-analysis}, we visualize the training process of each AT baseline and its improved version with the Sec. \ref{sec3:3-training-random} method. The robust accuracy reported in Fig. \ref{fig:training-sec33-analysis} is evaluated at the final state $\bz^{\star}$. It is witnessed that the Sec. \ref{sec3:3-training-random} method always improves over the AT baselines in robust accuracy. The robust accuracy results obtained by TRADES are always higher than those obtained by PGD-AT through the training process. This agrees with the conclusions for deep networks \cite{zhang2019theoretically,croce2021robustbench}. It is also observed that the PGD-AT framework leads to faster robustness overfitting, while this effect is not obvious in the TRADES experiments. Finally, it is noted that DEQ models are by default trained with Adam optimizer \cite{adam-optimizer}, zero weight decay, and learning rate cosine decay \cite{lr-cosine-decay}. This differs from the common practice in training robust deep networks, where the optimizer is usually SGD, with weight decay and early stopping after the first time the learning rate is decayed \cite{zhang2019theoretically,rice2020overfitting}. While these tricks have proven to be crucial in adversarial training \cite{pang2020bag}, we failed to implement similar techniques in DEQ training: For example, our initial experiments show that when setting weight decay to be 5e-4, the loss becomes NaN after about 20 training epochs with PGD-AT. We leave more effective and efficient adversarial training for DEQ models as future work. 

\begin{table}[t]
    \centering
    \caption{Statistics of the $P$ values calculated with all of the $144$ types of intermediate state attacks according to Eq. (\ref{eq-analysis-P}).}
    \label{tab:analysis_p}
    \resizebox{0.99\linewidth}{!}{\begin{tabular}{llccc}
        \toprule
        \textsc{AT} & \textsc{Method} & \textsc{Avg} & \textsc{Min} & \textsc{Max} \\
        \midrule
        \multirow{2}{*}{\textsc{PGD-AT}} & \textsc{\citet{yang2022acloser}} & 78.26\% & 72.98\% & 85.53\% \\
        ~ & ~\textsc{+ Sec. ~\,\ref{sec3:3-training-random}} & 78.39\% & 68.79\% & 83.77\% \\
        \midrule
        \multirow{2}{*}{\textsc{TRADES}} & \textsc{\citet{yang2022acloser}} & 81.46\% & 75.88\% & 87.99\% \\
        ~ & ~\textsc{+ Sec. ~\,\ref{sec3:3-training-random}} & 82.34\% & 76.63\% & 88.58\% \\
        \bottomrule
    \end{tabular}}
\vspace{-10pt}
\end{table}

\begin{table}[t]
    \centering
    \caption{Statistics of the $\Delta H$ values calculated with all of the $144$ types of intermediate state attacks according to Eq. (\ref{eq-analysis-dH}).}
    \label{tab:analysis_dH}
    \resizebox{0.99\linewidth}{!}{\begin{tabular}{llccc}
        \toprule
        \textsc{AT} & \textsc{Method} & \textsc{Avg} & \textsc{Min} & \textsc{Max} \\
        \midrule
        \multirow{2}{*}{\textsc{PGD-AT}} & \textsc{\citet{yang2022acloser}} & -0.1948 & -0.2133 & -0.1156 \\
        ~ & ~\textsc{+ Sec. ~\,\ref{sec3:3-training-random}} & -0.1027 & -0.1337 & -0.0612 \\
        \midrule
        \multirow{2}{*}{\textsc{TRADES}} & \textsc{\citet{yang2022acloser}} & -0.1174 & -0.1368 & -0.0769 \\
        ~ & ~\textsc{+ Sec. ~\,\ref{sec3:3-training-random}} & -0.1179 & -0.1445 & -0.0836 \\
        \bottomrule
    \end{tabular}}
\end{table}

\subsection{Quantitative Analysis for Prediction Entropy}

\looseness=-1 As demonstrated in Sec. \ref{sec3:2-testing-entropy-reduction}, progressively reducing the predicted entropy of the input is beneficial to the regulation of its corresponding neural dynamics. While the improved performances (shown in different tables) have proved the effectiveness of the method, in this section, we conduct additional analysis to quantitatively compare the prediction entropies between a clean input and its perturbed counterpart.

We conduct the comparison using different types of attacks and adversarial training configurations. In our analysis, the adversarial inputs are generated by PGD-10 with the $144$ different intermediate attacks described in the setup of Sec. \ref{sec:experiments}. Under a certain attack, for each clean input $\bx_j$ in the validation set $\mathcal{V}$ and its perturbed counterpart $\tilde{\bx}_j$, their corresponding equilibrium states are denoted as $\bz_j^{\star}$ and $\tilde{\bz}_j^{\star}$.

We propose two metrics for the quantitative comparison of prediction entropy. For a certain attack, we calculate $P$, the percentage of clean inputs with prediction entropy lower than their perturbed counterparts. Formally,
\begin{equation}
    P = \frac{1}{|\mathcal{V}|} \sum_{j} {\LARGE \mathds{1}}(H(\bz_j) < H(\tilde{\bz}_j^{\star})) \times 100 \%,
\label{eq-analysis-P}
\end{equation}
where $H(\cdot)$ is the prediction entropy defined in Eq. (\ref{eq:prediction-entropy}). 

We also calculate $\Delta H$, the difference of the prediction entropy averaged in the validation set between the clean inputs and their perturbed counterparts:
\begin{equation}
    \Delta H = \frac{1}{|\mathcal{V}|} \sum_{j}(H(\bz_j) - H(\tilde{\bz}_j^{\star})).
\label{eq-analysis-dH}    
\end{equation}

We calculate a pair of $P$ and $\Delta H$ for each attack. For the $144$ $P$s and the $144$ $\Delta H$s, we list their average, their minimum, and their maximum value in Table \ref{tab:analysis_p} and Table \ref{tab:analysis_dH}. According to the $P$ statistics in Table \ref{tab:analysis_p}, an average of over three-quarters of clean inputs have lower prediction entropy than their perturbed counterparts under all types of attacks. Furthermore, the $\Delta H$ statistics in Table \ref{tab:analysis_dH} show that the averaged prediction entropy of clean inputs in the validation set is always less than that of the perturbed inputs. These two quantitative findings again verify the viability of our entropy reduction method in Sec. \ref{sec3:2-testing-entropy-reduction}.

\section{Related Work}

\subsection{Training-Time Adversarial Defense}
\looseness=-1 Of all the training-time adversarial defense approaches, adversarial training has proven to be the most practical and effective technique for improving adversarial robustness \cite{athalye2018obfuscated}. However, AT only regulates the input-output behavior of neural models, leaving the internal neural dynamics under-supervised. The most related effort of explicit regulations along the entire neural dynamics is interval bound propagation (IBP) \cite{ibp,ibp-nlp,crown-ibp}. IBP is a technique from the certificated robustness field that envelopes the neural dynamics of deep networks with layer-wise linear functions for robustness guarantees. However, the complicated training procedure and the lack of scalability hinder its practical use. In our work, we exploit the structural uniquenesses of DEQs to impose explicit regulations along their neural dynamics.

\subsection{Test-Time Adversarial Defense}
Recently, several test-time defense techniques have been proposed to exploit additional computes during inference time for robustness improvement \cite{test-time-defense-1,test-time-defense-2,test-time-defense-3,test-time-defense4,test-time-defense5}. \citet{Croce2022testtime} conduct a thorough evaluation for test-time defenses. Our method is different from the previous works, as they focus on traditional deep networks, and usually calibrate only the output behavior of the model. In contrast, we progressively update the input along the forward pass to mount the neural dynamics of DEQ models to correct ``orbits'' in Sec. \ref{sec3:2-testing-entropy-reduction}. The fundamental difference between our work and prior arts originates from the special design of DEQ models, as they directly cast the forward process as solving the fixed-point equation iteratively. 

\subsection{Dynamical System Perspective for Neural Models}
\citet{eproposal} first proposes to interpret deep networks from a dynamical system perspective, which draws the connection between residual networks and the solution of an ODE with the forward Euler method. Since then, multiple types of novel neural models have been proposed, which directly model a dynamical system in their forward pass. Among them, neural ODEs \cite{neuralode} are integrated with continuous ODE, while DEQ models \cite{deq,mdeq} are instantiated by discrete fixed-point iteration systems.

Several efforts have been made in designing robust neural ODEs by drawing inspiration from control theory  \cite{lu-adv,yang2020interpolation,kang2021robustode}. By comparison, for DEQ models, Jacobian regularization is proposed in \cite{jacobian-deq} by regulating only the equilibrium state to improve training stability instead of robustness. In our work, we explicitly regulate the entire neural dynamics of DEQ models to improve the adversarial robustness.

\section{Conclusion}
In this work, we propose to reduce the prediction entropy of intermediate states along the DEQ neural dynamics with progressive input updates. We also randomly select intermediate states to compute the loss function during adversarial training of DEQ models. Our work significantly outperforms previous works on improving DEQ robustness and even surpasses strong deep network baselines. Our work sheds light on explicitly regulating DEQ and other neural models from the perspective of neural dynamics.

In the future, we will continue to exploit the special properties (single layer, fixed-point structure, neural dynamics, etc.) of DEQ models to design tailored adversarial defense strategies. We will also investigate the relationship between our methods and the inexact/approximated gradient proposed for implicit models \cite{fung2022jfb,deq-phantom-grad}. We also leave the validation of our methods on larger benchmarks as future work. 

\section*{Acknowledgment}

We thank all of the anonymous reviewers for their constructive suggestions. This work was supported by the National Key R\&D Program of China (2022ZD0160502) and the National Natural Science Foundation of China (No. 61925601, 62276152, 62236011).

\bibliography{icml23_yzh}
\bibliographystyle{icml2023}

\newpage
\appendix
\onecolumn
\section{Experiment Details}\label{app:exp-details}

We follow to adopt the DEQ-Large architecture used in \citet{yang2022acloser}, which has similar parameter counts as ResNet-18. More specifically, the DEQ cell we use is the multiscale DEQ-Large with $4$ scales. The numbers of head channels for each scale are $14$, $28$, $56$, and $112$. The numbers of channels for each scale are $32$, $64$, $128$, and $256$. The channel size of the final layer is $1680$. We use Broyden's method as the black-box solver, with $N=8$ forward solver iterations. 

Following \citet{mdeq,jacobian-deq}, the model is trained with Adam optimizer: the initial learning rate is $0.001$ with cosine decay; the Nesterov momentum is $0.98$; the weight decay is $0$. We follow \citet{mdeq,jacobian-deq} and \citet{yang2022acloser} to set batch size as $96$. We pretrain the DEQ models with the truncated deep networks (with standard training) for a good initialization of $\theta$ for $16000$ steps, and then conduct adversarial training, until the total of $210$ epochs training finishes. Following \citet{yang2022acloser}, we use the unrolling-based phantom gradient to train the DEQ models with $5$ unrolling steps \cite{deq-phantom-grad}. For PGD-AT training, the perturbation range is $\epsilon=8/255$; the step size is $\alpha=2/255$; the number of PGD steps during training is 10. Additionally, for TRADES training, the $1/\lambda=6$. 

We select the best model weight with the top robust accuracy on the CIFAR-10 validation set under ready-made PGD-10 (at the final state after unrolling). After obtaining the weight, we follow \citet{yang2022acloser} to leverage early-state defenses. In our experiments, we always use the last but one intermediate state as the early state for robustness evaluation. We leverage the unrolled intermediates method for intermediate attacks, as they form consistently stronger attacks than the simultaneous adjoint method. For the Sec. \ref{sec3:2-testing-entropy-reduction} method, we set $\beta=2/255$, $T_f=2$, and $R=10$ in our main experiments. All experiments are conducted on a single NVIDIA 3090 GPU. Our code is available at \url{https://github.com/minicheshire/DEQ-Regulating-Neural-Dynamics}.

\section{The Entropy in Dynamical Systems}\label{app:entropy-dynamical-systems}

In this section, we provide a brief introduction to the entropy in dynamical systems. This introduction heavily relies on \citet{young2003entropy}. We draw connections between the concepts of the entropy in dynamical systems with DEQ models.

\begin{definition}
Let $(\mathcal{X}, d)$ be a metric space, and $\mu$ be a probability measure on it. Let $\alpha=\{X_1, \cdots, X_C\}$ be a finite partition of $\mathcal{X}$. Then the entropy of the partition is defined as
\begin{equation}\label{df-1}
    H(\alpha) = H(\{X_1, \cdots, X_C\}) = -\sum_{i=1}^{C} \mu(X_i)\log\mu(X_i).
\end{equation}
\end{definition}
\textbf{Implications:} Interpreting $\mathcal{X}$ as the space of $\bz^{[t]}$ in DEQ models, the partition of $\mathcal{X}$ is naturally constructed by the $C$ classes in a classification task. $\mu$ can therefore be instantiated by the classification head in DEQ models.

\begin{definition}
Let $(f, \mu)$ be an ergodic discrete dynamical system, with $f:\mathcal{X} \to \mathcal{X}$ is the mapping from $\mathcal{X}$ to $\mathcal{X}$. We define the $\alpha$-address of the $n$-orbit starting at $x \in \mathcal{X}$ as
\begin{equation}
    \bigvee_{i=0}^{n-1}f^{-i}\alpha = \{x: x \in X_{i_0}, fx \in X_{i_1}, \cdots, f^{n-1}x \in X_{i_{n-1}} \}
\end{equation}
for some $(i_0,i_1, \cdots, i_{n-1})$. The metric entropy of $f$ with partition $\alpha$ is then defined as 
\begin{equation}
    h_\mu(f, \alpha) = \lim_{n \to \infty}\frac{1}{n} H(\bigvee_{i=0}^{n-1}f^{-i}\alpha).    
\end{equation}
\end{definition}
\textbf{Implications:} The $n$-orbit of $\{x, fx, \cdots, f^{n-1}x \}$ coincides with the $n$-length neural dynamics of DEQ models we defined in Eq. \ref{eq-4-neural-dynamics}. The $\alpha$-address $\bigvee_{i=0}^{n-1}f^{-i}\alpha$ corresponds to all $x$s that induces an $n$-length orbit traversing different subspaces of the partitions with the mapping $f$. Of all $x$s that comprise the $\alpha$-address of the $n$-orbit, $H(\bigvee_{i=0}^{n-1}f^{-i}\alpha)$ characterizes the entropy of the address and roughly reflects how ``chaotic'' the $f$ is under the partition $\alpha$ in the space $\mathcal{X}$. While $h_\mu(f, \alpha)$ is defined by taking the limit of $n \to \infty$, $\lim_{n\to \infty} f^{n-1}x$ is exactly the equilibrium point of $f$ starting with $x$. This suggests that $h_\mu(f, \alpha)$ is defined over the entire orbit and concerned about the equilibrium state behavior. 

\begin{theorem}
For $x \in \mathcal{X}$, $n \in \mathbb{N}^{+}$, $\epsilon>0$, define
\begin{equation}
    B(x,n,\epsilon) := \{ y \in \mathcal{X}: d(f^{i}x, f^{i}y) < \epsilon, 0\le i < n \}.
\end{equation}
With $\mu$ defined over partition $\alpha$, we have
\begin{equation}
    h_\mu (f, \alpha) = \limsup_{n\to\infty} -\frac{1}{n} \log \mu B(x,n,\epsilon).
\label{eqeqeqeq}
\end{equation}
Specifically, if $\mathcal{X}$ is a compact Riemann manifold with $\mu$ equivalent to the Riemannian measure, we have 
\begin{equation}
    h_\mu (f, \alpha) = \sum_i \max(\lambda_i, 0),
\label{eqeqeqeq-lyapunov}    
\end{equation}
where $\lambda_i$s are the Lyapunov exponents of $(f,\mu)$.
\end{theorem}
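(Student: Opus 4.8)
The statement bundles together three classical facts from smooth ergodic theory / topological dynamics, so the plan is to prove each in turn rather than all at once. First I would establish the Brin--Katok local entropy formula, which is exactly equation~(\ref{eqeqeqeq}): for $\mu$-a.e. $x$ and any $\epsilon>0$, $h_\mu(f,\alpha)$ equals $\limsup_{n\to\infty} -\frac{1}{n}\log\mu B(x,n,\epsilon)$. The standard route is to relate the dynamical ball $B(x,n,\epsilon)$ to elements of the refined partition $\bigvee_{i=0}^{n-1} f^{-i}\alpha$: choosing $\alpha$ with diameter smaller than $\epsilon$ and boundaries of $\mu$-measure zero, one shows $B(x,n,\epsilon)$ is sandwiched between the $(n,\alpha)$-cylinder through $x$ and a bounded union of neighboring cylinders. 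Then the Shannon--McMillan--Breiman theorem, $-\frac1n\log\mu\big(\bigvee_{i=0}^{n-1}f^{-i}\alpha\big)(x)\to h_\mu(f,\alpha)$ a.e., transfers to the ball version; a short argument with the generator/increasing-$\epsilon$ limit removes the dependence of $h_\mu(f,\alpha)$ on the particular $\alpha$ used, recovering the Kolmogorov--Sinai entropy. I would invoke SMB as a black box (it is stated earlier only implicitly, but it is the natural tool here).

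Second, for the Riemannian case I would derive equation~(\ref{eqeqeqeq-lyapunov}) from the Brin--Katok formula together with Oseledets' multiplicative ergodic theorem and the stable/unstable manifold machinery. The idea: by Oseledets, a.e.\ $x$ has a measurable splitting of the tangent space into Oseledets subspaces with Lyapunov exponents $\lambda_i$, and the local unstable manifold $W^u_{loc}(x)$ has dimension equal to the number of positive exponents; the dynamical ball $B(x,n,\epsilon)$, viewed through the Riemannian measure, contracts in the unstable directions at rate governed by $\sum_i\max(\lambda_i,0)$ as we iterate backward, so $\mu B(x,n,\epsilon)\asymp e^{-n\sum_i\max(\lambda_i,0)}$ up to subexponential factors. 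Plugging this into~(\ref{eqeqeqeq}) gives the Pesin-type identity. The honest version of this is the Margulis--Ruelle inequality for the $\le$ direction and the Pesin entropy formula (requiring $\mu$ absolutely continuous, or more precisely SRB) for the $\ge$ direction; since the hypothesis here is "$\mu$ equivalent to the Riemannian measure," Pesin's formula applies and equality holds.

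The main obstacle is the second part: the Pesin entropy formula is a deep theorem whose full proof (absolute continuity of the stable foliation, the construction of Pesin blocks, Lyapunov charts) is far beyond what can be reproduced here, so in practice I would cite \citet{young2003entropy} and the underlying works of Pesin and Ledrappier--Young rather than reprove it, treating the theorem statement as a curated summary of standard results with pointers. If a self-contained argument were demanded, the realistic fallback is to prove only the Brin--Katok formula~(\ref{eqeqeqeq}) in full (using SMB plus the cylinder-sandwiching lemma) and the easy Margulis--Ruelle inequality $h_\mu(f,\alpha)\le\sum_i\max(\lambda_i,0)$ via a covering argument with balls of radius $e^{-n\lambda_{max}}$, then state that the reverse inequality under the absolute-continuity hypothesis is Pesin's theorem. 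Given the role this theorem plays in the paper --- motivating, in Appendix~\ref{app:entropy-dynamical-systems}, the heuristic "higher prediction entropy $\Rightarrow$ larger Lyapunov exponents $\Rightarrow$ less stable dynamics" --- a proof at the level of "Brin--Katok in detail, Pesin by citation" is the appropriate calibration.
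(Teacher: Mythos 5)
Your proposal is correct and matches the paper's route: the paper's entire proof is a citation to Pesin (1977) and Brin--Katok (1983), i.e.\ exactly the two classical results (the Brin--Katok local entropy formula for Eq.~(\ref{eqeqeqeq}) and the Pesin entropy formula, with the Margulis--Ruelle inequality giving one direction, for Eq.~(\ref{eqeqeqeq-lyapunov})) that you identify and propose to prove or cite. Your ``Brin--Katok in detail, Pesin by citation'' calibration is if anything more than the paper itself provides.
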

\begin{proof}
The proof is given by \citet{Pesin_1977} and \citet{brin1983local}.
\end{proof}
\textbf{Implications}: $B(x,n,\epsilon)$ indicates the ``amount'' of the neighbors of $x$ that induce the trajectories close to that of $x$ with $f$. This definition shares a similar idea with our derivation in Sec. \ref{sec3:1-overview}, as we consider the deviation of the neural dynamics under clean or perturbed inputs. We want all the perturbed inputs to lie within $B(x,n,\epsilon)$ in Eq. \ref{eqeqeqeq}, in this way all the orbits are regulated. This corresponds with an increase in $\mu B(x,n,\epsilon)$, and from Eq. \ref{eqeqeqeq} we realize that reducing the entropy $h_\mu (f, \alpha)$ achieves this. From Eq. (\ref{eqeqeqeq-lyapunov}) we know that reducing $h_\mu (f, \alpha)$ is also equivalent to reducing the sum of the Lyapunov exponents of the system. Smaller Lyapunov exponents imply more stable dynamical systems \cite{lyapunov-1992}, which is equivalent to more robust neural models as demonstrated by \citet{yang2020interpolation,kang2021robustode}. Such correlations motivate us to observe and reduce prediction entropy along the neural dynamics in DEQ models.

\section{Additional Ablation Studies}\label{app:additional-ablation}

\subsection{Effect of $\lambda$ in Eq. (\ref{eq:unrolled-intermediates}) of Intermediate-State Attacks}\label{app:additional-ablation-lambda}

In this section, we compare the effect of setting $\lambda$ to be $1.0$ or $0.5$ in Eq. (\ref{eq:unrolled-intermediates}) as the intermediate-state attacks. We use the ``TRADES+Secs.\ref{sec3:2-testing-entropy-reduction} \& \ref{sec3:3-training-random}'' defense method for this study.
Similar to Fig. \ref{fig:attack-along-neural-dynamics}-(a), we plot the lowest accuracy results under the attacks at each intermediate states along the neural dynamics in Fig. \ref{fig:FMunroll-fig}. It is obvious that $\lambda=0.5$ builds consistently stronger attacks than $\lambda=1.0$.
\begin{figure}[h]
    \centering
    \includegraphics[width=0.5\textwidth]{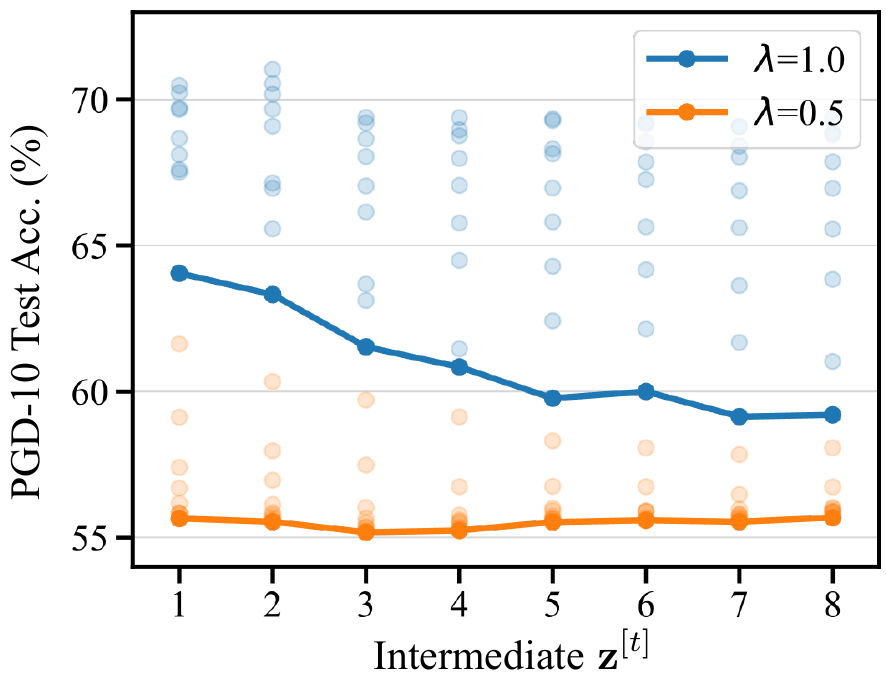}
    \caption{Performance (\%) of the ``TRADES+Secs.\ref{sec3:2-testing-entropy-reduction} \& \ref{sec3:3-training-random}'' defense under all intermediate-state attacks along the neural dynamics, with $\lambda=1.0$ and $\lambda=0.5$. $\lambda=0.5$ forms the intermediate-state attacks consistently stronger than those with $\lambda=1.0$.}
    \label{fig:FMunroll-fig}
\end{figure}

\begin{table}[h]
    \centering
    \caption{The strongest attacks in Eq. (\ref{eq:unrolled-intermediates}) at each intermediate state along the neural dynamics.}
    \label{tab:FMunroll-tab}    
    \begin{tabular}{cccccccccc}
    \toprule
        \multirow{2}{*}{$\lambda=1.0$} & $(i,K_a)$ & $(1,1)$ & $(2,1)$ & $(3,1)$ & $(4,1)$ & $(5,1)$ & $(6,1)$ & $(7,1)$ & $(8,1)$ \\
    \cmidrule{2-10}
        ~ & \textsc{PGD} & 64.05 & 63.32 & 61.53 & 60.84 & 59.77 & 60.00 & 59.14 & 59.20  \\
    \midrule    
        \multirow{2}{*}{$\lambda=0.5$} & $(i,K_a)$ & $(1,8)$ & $(2,7)$ & $(3,5)$ & $(4,5)$ & $(5,5)$ & $(6,5)$ & $(7,4)$ & $(8,5)$ \\
    \cmidrule{2-10}        
        ~ & \textsc{PGD} & 55.65 & 55.54 & 55.18 & 55.24 & 55.53 & 55.59 & 55.54 & 55.68 \\
    \bottomrule
    \end{tabular}
\end{table}

We further list the strongest attacks at each intermediate state along the neural dynamics in Table \ref{tab:FMunroll-tab}. For $\lambda=1.0$, the best setting of $K_a$ for each state $\bz^{[t]}$ is $1$. The overall strongest attack is formed by $(7,1)$. It is noted that $\bz^{[7]}$ is one iteration away from $\bz^{\star} = \bz^{[8]]}$. On the contrary, for $\lambda=0.5$, the strongest attack is in the middle of the neural dynamics, with $(i,K_a)=(3,5)$, and $K_a$ is larger than $1$ under $\lambda=0.5$. It is inferred that $\lambda=0.5$ constructs intermediate attacks with more accurate gradient estimation: the estimated gradients become more accurate as the unrolling step $K_a$ becomes larger, so that the corresponding attacks are stronger.

\subsection{Effect of the Jacobian Regularization Factor}\label{app:additional-ablation-jacreg}

\begin{figure}[h]
    \centering
    \includegraphics[width=0.8\textwidth]{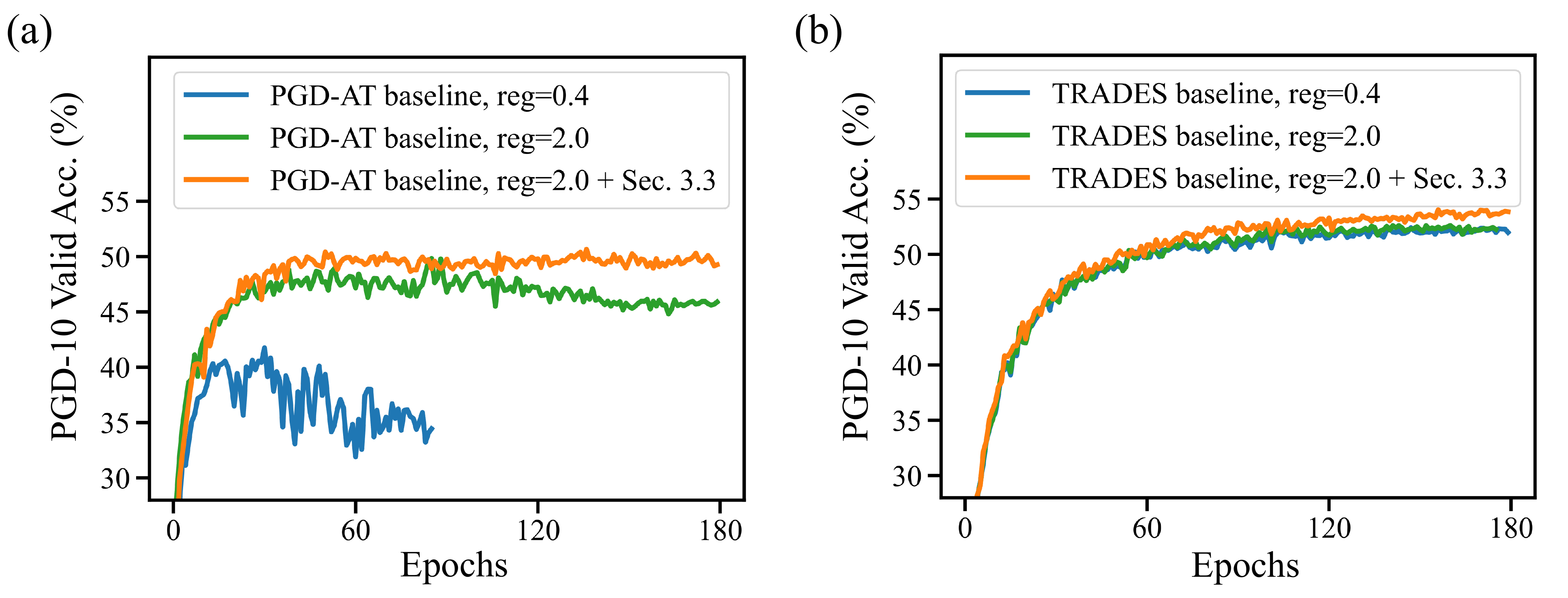}
    \caption{Comparisons between different Jacobian regularization factors during the training phase of baseline methods.}
    \label{fig:jac-reg-comparisons}
\end{figure}

We also tried to train DEQ models by setting the Jacobian regularization factor proposed by \citet{jacobian-deq} as $0.4$, following \citet{yang2022acloser}. However, shown in Fig. \ref{fig:jac-reg-comparisons}, we found that under PGD-AT, the training becomes unstable with suboptimal robustness performance. After increasing the factor to $2.0$, we found the performance during training is improved. We suspect that a larger Jacobian regularization factor is required to stabilize the training PGD-AT. In contrast, when trained with TRADES, we found that the empirical difference between setting the factor to be $0.4$ or $2.0$ is marginal. It is therefore inferred that TRADES might implicitly impose more regularization during the adversarial training process, which might also account for the superiority of TRADES experiments.

\clearpage
\section{Running Time Analysis}\label{app:time-cost}

\looseness=-1 In this section, we compare the time cost of our methods with the vanilla adversarial training baseline for DEQ models. We first compare the training-time methods: the vanilla baseline and the Section \ref{sec3:3-training-random} method. The comparison is shown in Table \ref{tab:training-time-comparison}.

\begin{table}[h]
    \centering
    \caption{Training speed comparison (Samples/s) between the vanilla adversarial training baseline \cite{yang2022acloser} and the Sec. \ref{sec3:3-training-random} method.}
    \label{tab:training-time-comparison}    
    \begin{tabular}{llc}
    \toprule
        \textsc{AT Framework} & \textsc{Method} & \textsc{Training Speed (Samples/s)} \\
    \midrule
        \multirow{2}{*}{\textsc{PGD-AT}} & \citet{yang2022acloser} & 30.2 \\
        ~ & + \textsc{Ours (Sec. \ref{sec3:3-training-random})} & \textbf{39.9} \\
    \midrule
        \multirow{2}{*}{\textsc{TRADES}} & \citet{yang2022acloser} & 24.0 \\
        ~ & + \textsc{Ours (Sec. \ref{sec3:3-training-random})} & \textbf{30.6} \\
        \bottomrule
    \end{tabular}
\end{table}
According to Table \ref{tab:training-time-comparison}, our Sec.\ref{sec3:3-training-random} method is faster than the vanilla adversarial training. This is because we use random intermediate states for loss computation in Sec. \ref{sec3:3-training-random}. In this way, the fixed-point solver in the forward process runs fewer than $N$ iterations. In contrast, the solver always runs for $N$ iterations in the baseline.

Next, we compare the running speed among different settings in the Sec. \ref{sec3:2-testing-entropy-reduction} method during inference.   
\begin{table}[h]
    \centering
    \caption{Inference speed comparison (Samples/s) with different settings in the Sec. \ref{sec3:2-testing-entropy-reduction} method during inference. $R=0$ indicates the baseline method without Sec. \ref{sec3:2-testing-entropy-reduction}. }   
    \begin{tabular}{l|c|rrrr}
        \toprule
           ~    & $R=0$ & $R=1$ & $R=5$ & $R=10$ & $R=20$  \\
        \midrule   
        $T_f=1$ & \multirow{4}{*}{94.9} & 32.1 & 11.8 & 8.6 & 6.5 \\
        $T_f=2$ & ~ & 42.6 & 20.5 & 11.7 & 7.5 \\
        $T_f=4$ & ~ & 48.6 & 29.9 & 18.6 & 12.8 \\
        $T_f=8$ & ~ & 49.6 & 33.1 & 26.6 & 17.6 \\
        \bottomrule
    \end{tabular}
    \label{tab:sec32-running-time}
\end{table}

Reported in Table \ref{tab:sec32-running-time}, the time consumption of Sec. \ref{sec3:2-testing-entropy-reduction} method is roughly proportional to $R$ and inversely proportional to $T_f$. This coincides with our Algorithm \ref{alg:entropy_reduction}. Comparing Table \ref{tab:sec32-running-time} with Table \ref{tab:ablation-effect-sec32}, we set $R=10$ and $T_f=2$ to achieve the trade-off between inference speed and robustness performance.


\end{document}